\newcommand{\argmin}{\operatornamewithlimits{argmin}}
\newtheorem{theorem}{Theorem}[section]
\newtheorem{lemma}[theorem]{Lemma}
\newtheorem{corollary}[theorem]{Corollary}
\journal{XXXX}
\begin{document}

\begin{frontmatter}



\title{Accelerating Physics-Informed Neural Network Training with Prior Dictionaries}

\author[label1]{Wei Peng}
\ead{weipeng0098@126.com}
\author[label1]{Weien Zhou\corref{cor1}\fnref{label2}}
\fntext[label2]{Corresponding Author}
\ead{weienzhou@nudt.edu.cn}
\author[label1]{Jun Zhang}
\ead{mcgrady150318@163.com}
\author[label1]{Wen Yao}
\ead{wendy0782@126.com}
\address[label1]{National Innovation Institute of Defense Technology, Chinese Academy of Military Science, 100071, Beijing.}

\author{}

\address{}

\begin{abstract}
		Physics-Informed Neural Networks (PINNs) can be regarded as general-purpose PDE solvers, but it might be slow to train PINNs on particular problems, and there is no theoretical guarantee of corresponding error bounds. In this manuscript, we propose a variant called Prior Dictionary based Physics-Informed Neural Networks (PD-PINNs). Equipped with task-dependent dictionaries,  PD-PINNs enjoy enhanced representation power on the tasks, which helps to capture features provided by dictionaries so that the proposed neural networks can achieve faster convergence in the process of training. In various numerical simulations, compared with existing PINN methods, combining prior dictionaries can significantly enhance  convergence speed. In terms of theory, we obtain the error bounds applicable to PINNs and PD-PINNs for solving elliptic partial differential equations of second order. It is proved that under certain mild conditions, the prediction error made by neural networks can be bounded by expected loss of PDEs and boundary conditions.

\end{abstract}



\begin{keyword}
prior dictionary\sep accelerated training\sep PINN, error bound\sep elliptic equations


\end{keyword}

\end{frontmatter}


	\section{Introduction}

As is known, neural networks are widely used to solve various scientific computing problems \cite{lin2018all, ding2001multi,wang2004modeling}. A sequence of recent works has applied neural networks  to solve PDEs successfully \cite{long2019pde,raissi2019physics,lu2019deepxde}. We consider the following partial differential equation for the function $u(\cdot)$:
\begin{align}
\mathcal{L}[u](x)&=q(x), \quad x\in\Omega,\nonumber\\
u(x) &= \tilde u(x), \quad x \in\partial \Omega,\label{pde}
\end{align}
the solution of which can be approximated by a neural network. Based on this motivation, Physics-Informed Neural Networks (PINNs) \cite{raissi2017physics} construct such neural networks penalized by the discrepancy between the right-hand side (RHS) and the left-hand side (LHS) of  problem \eqref{pde}.
To make the so-called physics information be learned by the neural networks, the loss function usually consists of three parts: partial differentiable structure loss (PDE loss), boundary value condition loss (BC loss),  and initial value condition loss (IC loss). The structure of a PINN is shown in Figure \ref{pinn_structure}.
\begin{figure}[htbp]
	\centering
	\includegraphics[width=1\textwidth]{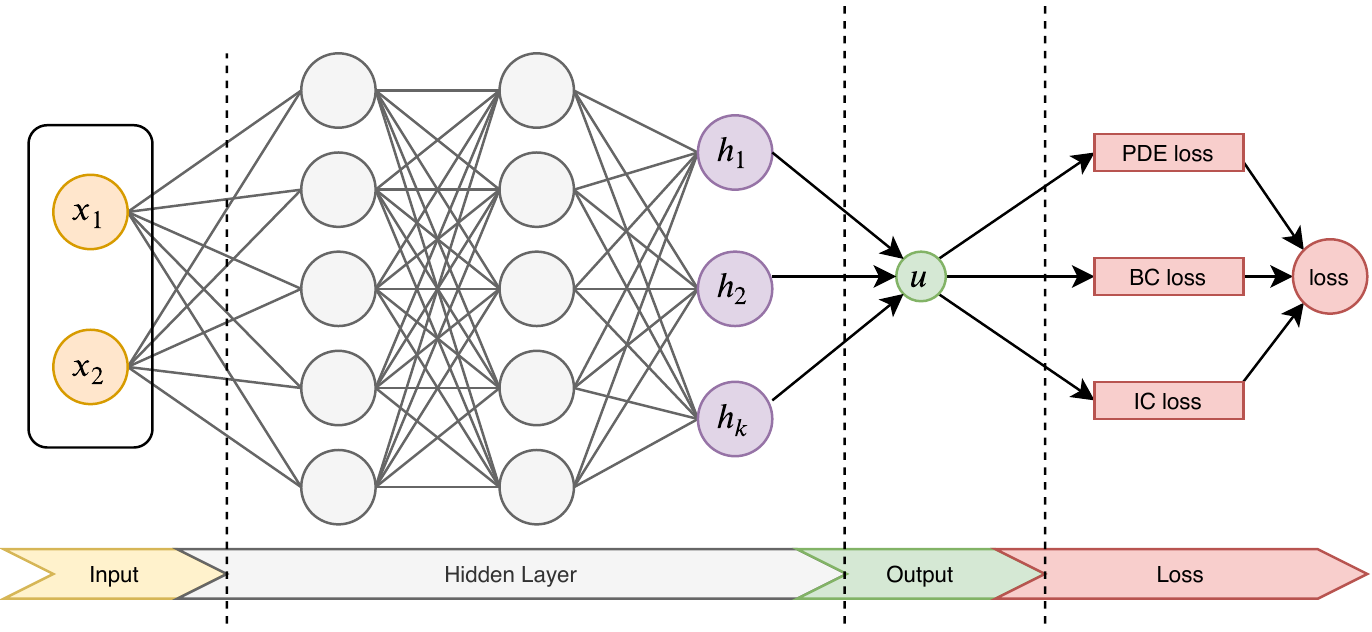}
	\caption{Illustration of a PINN}
	\label{pinn_structure}
\end{figure}

Denote a PINN as F being parameterized by $\Theta$. When there is no ambiguity, we regard IC as BC. Then expected total loss during training consists of two parts:
\begin{align}\label{emprical_loss}
{Loss}(\Theta)={Loss}_{PDE}(\Theta)+{Loss}_{BC}(\Theta).
\end{align}
The expected PDE loss is
\begin{align}\label{pde_loss_1}
{Loss}_{PDE}(\Theta):=\mathbb{E}_{\mathbf{Y}}\left[|\mathcal{L}[F_\Theta](\mathbf{Y})-q(\mathbf{Y})|^2\right],
\end{align}
where random variable $\mathbf{Y}$ is uniformly distributed on $\Omega$. The expected BC loss is
\begin{align}\label{bc_loss_1}
{Loss}_{BC}(\Theta):=\mathbb{E}_{\mathbf{X}}\left[|F_\Theta(\mathbf{X})-\tilde u(\mathbf{X})|^2\right],
\end{align}
where random variable $\mathbf{X}$ is uniformly distributed on $\partial\Omega$.
The optimization problem of training $F_{\Theta}$ is reformulated as follows:
\begin{align}\label{probabilistic_loss}
\bar\Theta \in\argmin_{\Theta}\left\{ \mathbb{E}_{\mathbf{Y}}\left[|\mathcal{L}[F_\Theta](\mathbf{Y})-q(\mathbf{Y})|^2\right]+\mathbb{E}_{\mathbf{X}}\left[|F_\Theta(\mathbf{X})-\tilde u(\mathbf{X})|^2\right]\right\}.
\end{align}
To accelerate finding a minima of \eqref{probabilistic_loss},  \cite{Jagtap2020} considers adjusting general structures of  networks and introduces a trainable variable to scale activation functions adaptively. Later, the subsequent work \cite{jagtap2019locally} uses local adaptive activation functions. From the perspective of data-driven, the idea of leveraging prior structured information is also widely applied to training acceleration, such as wavelet representation\cite{zhang1995wavelet}, periodic structures\cite{rairan2014reconstruction}, symplectic structures\cite{sienko2002hamiltonian}, energy preserving tensors\cite{ling2016reynolds}. These methods employs specially designed neural networks and are applicable to a particular class of problems.

Motivated by these works, in this manuscript we introduce Prior Dictionary based Physics-Informed Neural Networks (PD-PINNs), which integrate prior information into PINNs to accelerate training. As is shown in Figure \ref{PINN-PD}, compared with a PINN, a PD-PINN has an additional dictionary fusion layer, which combines prior information with the output layer of the neural network by the inner product. See section \ref{pinn-pd-section} for detailed description.
\begin{figure}[htbp]
	\centering
	\includegraphics[width=1\textwidth]{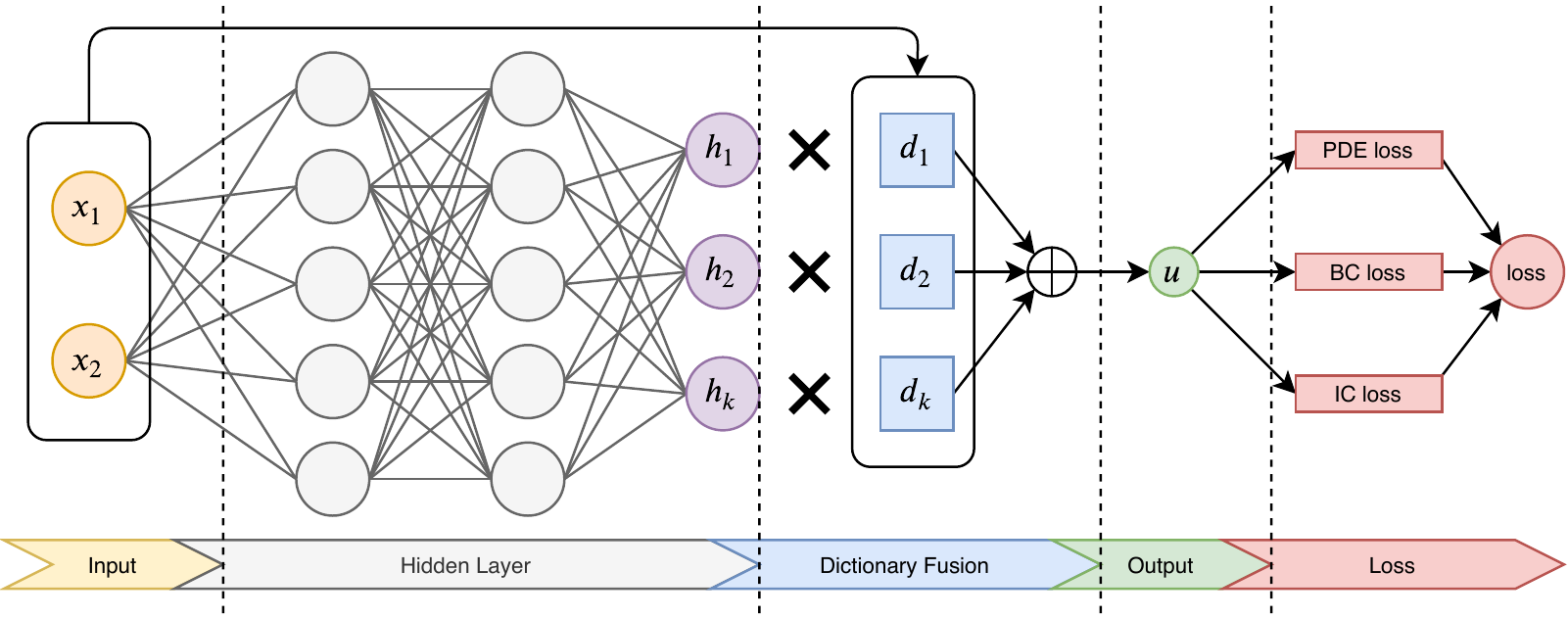}
	\caption{Illustration of a PD-PINN}
	\label{PINN-PD}
\end{figure}
The idea of PD-PINNs is mainly motivated by two aspects. On the one hand, our method is derived from the traditional spectral methods\cite{henrici1979fast,gottlieb1977numerical}, which decompose the ground truth over an orthogonal basis. The methods enjoy the guarantee of spectral convergence, and these basis functions can be regarded as a special type of prior dictionaries.
Nevertheless,  a finite basis expansion would result in truncation error. PD-PINNs utilize the universal approximation ability of neural networks to make up the truncation error, which can achieve high accuracy with a prior dictionary consisting of a small amount of basis functions.  On the other hand, since the essence of training neural networks is to learn representation, we could embed priors into the network before training stages. Therefore, one natural way is to construct a prior dictionary based PINN to achieve the  ``pre-train", thus to accelerate training.

Another issue of PINNs is the suspicious error bounds. There is no guarantee that small PDE loss and BC loss in \eqref{probabilistic_loss} lead to a small total loss. To partially address this problem, we propose an error bound for PINNs solving elliptic PDEs of second order in the sense of $\|\cdot\|_\infty$ under some mild conditions. 

The main contribution of this manuscript is twofold.
\begin{enumerate}
	\item On the other hand, we propose a variant of PINNs, i.e., PD-PINNs, which employ prior information and accelerate training of neural networks. For various PDE problems defined on different types of domains, we construct corresponding prior dictionaries. The numerical simulations illustrate accelerated convergence of PD-PINNs. PD-PINNs can
	even recover the true solutions of some problems where PINNs hardly converge.
	\item On the one hand, we have proved that the error between the neural network and the ground truth in the sense of the infinity norm can be bounded by the two terms on the RHS of \eqref{probabilistic_loss}. Accordingly it is guaranteed that minimizing the RHS of \eqref{emprical_loss} makes the neural network approach the true solution. 
\end{enumerate}
The rest of the manuscript is organized as follows. Section 2 introduces the method and provides the theoretical error bound of PINNs on elliptic PDE of second order.
Four numerical simulations on synthetic problems are conducted in section 3.
Section 4 concludes this manuscript.
\section{Methodology}
We first provide construction details of PD-PINNs in subection \ref{pinn-pd-section}. 
Then we provide theoretical guarantees for PINN error bounds in section \ref{pinn-thm}.
\subsection{PD-PINN}\label{pinn-pd-section}
Let $N_\Theta(\cdot)$ be a neural network parameterized by $\Theta$, which is a mapping from $\mathbb{R}^d$ into $\mathbb{R}^N$.
We employ a Multi-Layer Perceptron (MLP) with the activation function $\sigma(\cdot):=\tanh(\cdot)$, i.e.,
\begin{align*}
N_\Theta:=L_D\circ \sigma\circ L_{D-1}\circ\sigma\circ\cdots\circ\sigma\circ L_{1},
\end{align*}
where
\begin{align*}
L_1(x)&:=W_1x+b_1,\quad W_1\in\mathbb{R}^{d_1\times d},b_1\in\mathbb{R}^{d_1},\\
L_i(x)&:=W_ix+b_i,\quad W_i\in\mathbb{R}^{d_i\times d_{i-1}},b_i\in\mathbb{R}^{d_i} ,\forall i=2,3,\cdots D-1,\\
L_D(x)&:=W_Dx+b_D,\quad W_D\in\mathbb{R}^{N\times d_{D-1}},b_{D}\in\mathbb{R}^{N}.
\end{align*}
Then the parameter collection is $\Theta:=\{W_1,b_1, W_2, b_2, \cdots, W_D, b_D\}$. $N_{\Theta}$ is the trainable part in our networks. Besides the part, we define the prior \textit{dictionary}  $D:\mathbb{R}^d\rightarrow\mathbb{R}^N$ as a vector-valued function, i.e.,
\begin{align*}
D(x):=[f_1(x), f_2(x),\cdots, f_N(x)]\in\mathbb{R}^N,
\end{align*}
where $f_i:\mathbb{R}^d\rightarrow\mathbb{R}, i=1,2,\cdots, N$ are called \textit{word} functions of dictionary $D$. Thus prior information is encoded in these word functions.
Combining the trainable part and the given prior, we formulate  a PD-PINN as
\begin{align}\label{inner_product}
F_\Theta(x):=\langle D(x), N_\Theta(x)\rangle, \quad x\in\mathbb{R}^d,
\end{align}
the structure of which has several advantages:
\begin{itemize}
	\item{\it Plug and play}. Prior dictionaries are not integrated into the essential trainable neural networks so that there is no need to design special networks for learning the priors to solve various problems. Instead, only a designed dictionary in the fusion step should be updated. 
	\item{\it Interpretation}. Physics informed priors are fused with the uninterpretable network via the simple inner product operation, which falls into the area of generalized linear models, and the linear form can usually provide physical significance. For example, if we make the dictionary be a family of trigonometric functions $\{\cos(kx), \sin(kx)\}$, one may interpret the $k$-th element of $N_{\Theta}(x)$ as the magnitude of certain frequency at position $x$. 
	\item{\it Flexible Prior selection.} Since the dictionary and the essential neural network are independent before the final fusion, there is no restriction on the choice of dictionaries. A variety of word functions are available and can be flexibly selected for specific problems.
\end{itemize}
We can construct a dictionary based on the following considerations:
\begin{enumerate}
	\item \textbf{Spatial-based dictionaries.} This kind of dictionaries is designed to encode local magnitudes in word functions. For example, to solve equations on $\Omega$ with a support $\hat \Omega\subset \Omega$, we may construct word functions with supports on $\hat \Omega$.
	
	\item \textbf{Frequency-based dictionaries.} These dictionaries embed frequency priors into word functions such that neural networks enjoy the representation ability in both spatial and frequency domains. 
	Since convergence in frequency domains seems vital in training neural networks  \cite{xu2019training,luo2019theory}, the frequency-based dictionaries may accelerate training stages, especially for the periodic ground truth functions. Our numerical simulation employs this kind of dictionaries. We consider 1d Fourier basis in section  \ref{poisson1d} and \ref{poissonTime}.  Two-dimensional Fourier basis is considered in section \ref{poisson2d}. Sphere harmonic basis is employed in section \ref{poissonsph}.
	
	\item \textbf{Orthogonality. }There is no mandatory orthogonal requirements on  dictionaries.  However, since we have not included any normalization techniques yet, dictionaries with orthogonal word functions are employed in our simulations for stability.
	
	\item \textbf{Learnable dictionaries.} Instead of assigning word functions manually, dictionary construction can also be driven by data. In practice, we may be required to solve the same equation several times with varying boundary value conditions. These solutions may share some common features and could be learned by Principle Component Analysis(PCA)\cite{mohammadi2014pca}, Nonnegative Matrix Factorization(NMF)\cite{esser2012convex, berry2007algorithms} and other dictionary learning techniques\cite{mairal2011task}.
\end{enumerate}

\subsection{Error Bounds of PINNs}\label{pinn-thm}
In this subsection, we provide an error bound on the discrepancy between a trained $F_{\theta}$ and the ground truth under mild assumptions. Consider equation \eqref{pde} with the second order operator:
\begin{align*}
\mathcal{L}[u]:=\sum_{i,j}a_{i,j}(x)\frac{\partial^2 u }{\partial x_i x_j}+\sum_{i}b_i(x)\frac{\partial u}{\partial x_i}+c(x)u.
\end{align*}
We denote $A(x):=[a_{i,j}(x)]_{i,j}\in\mathbb{R}^{d\times d}$ and $b(x):=[b_i(x)]_{i}\in\mathbb{R}^d$. If $\zeta^TA(x)\zeta\geq \lambda(x)\|\zeta\|^2$ holds for some function $\lambda(x)>0$ on $\Omega$, we say $\mathcal{L}$ is strictly elliptic on $\Omega$. In the following theorem, for simplicity, we suppose that $\lambda_0>0$ is a uniformly lower bound of $\lambda(x)$,  $\|b(x)\|_{\infty}$ is upper bounded, and $c(x)\leq 0$ over $\bar\Omega$, where $\bar \Omega$ represents the closure of $\Omega$. Please refer to \cite{gilbarg2015elliptic} for  explicit explanation of the symbols used in this subsection.

\begin{theorem}[Error bounds of PINNs on elliptic PDEs]\label{thm1}
	Suppose that $\Omega\subset \mathbb{R}^d$ is a bounded domain,  $\mathcal{L}$ is strictly elliptic and $\tilde u\in C^0(\bar \Omega)\cap C^2(\Omega)$ is a solution to \eqref{pde}. If the neural network $F_\Theta$ satisfies that
	\begin{enumerate}
		\item[\rm{(1).}] $ \sup_{x\in\partial\Omega}|F_{\Theta}(x)-\tilde u(x)|<\delta_1$;
		\item[\rm{(2).}]  $\sup_{x\in\Omega} |\mathcal{L}[F_\Theta](x)-q(x)|< \delta_2$;
		\item[\rm{(3).}] $F_\Theta\in C^0(\bar \Omega)\cap C^2(\Omega)$,
	\end{enumerate}
	then the error of $F_\Theta(\cdot)$ over $\Omega$ is bounded by
	\begin{align}
	\sup _{x\in\Omega}|F_{\Theta}(x)-u(x)| \leqslant \delta_1+C \frac{\delta_2}{\lambda_0}, \label{thm1_eq}
	\end{align}
	where $C$ is a positive constant depending on $\mathcal{L}$ and  $\Omega$.
\end{theorem}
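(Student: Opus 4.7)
The plan is to reduce everything to a standard maximum principle estimate for strictly elliptic operators. Set $w := F_\Theta - u$. By hypothesis (3) and the assumed regularity of $u$, we have $w \in C^0(\bar\Omega) \cap C^2(\Omega)$, so $\mathcal{L}[w]$ is defined classically in $\Omega$. Linearity of $\mathcal{L}$ gives $\mathcal{L}[w] = \mathcal{L}[F_\Theta] - \mathcal{L}[u] = \mathcal{L}[F_\Theta] - q$, so hypothesis (2) yields $\sup_{\Omega}|\mathcal{L}[w]| < \delta_2$, while hypothesis (1) yields $\sup_{\partial\Omega}|w| < \delta_1$. The goal is then to control $\sup_\Omega |w|$ by $\sup_{\partial\Omega}|w|$ and $\sup_\Omega|\mathcal{L}[w]|/\lambda_0$.

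The main tool I would invoke is the a priori $L^\infty$ bound for strictly elliptic operators with nonpositive zeroth-order coefficient, stated in Gilbarg--Trudinger (Theorem 3.7 in \cite{gilbarg2015elliptic}): for $v \in C^0(\bar\Omega) \cap C^2(\Omega)$ with $\mathcal{L}v \geq f$ in $\Omega$, one has
\begin{equation*}
\sup_{\Omega} v \;\leq\; \sup_{\partial\Omega} v^{+} \;+\; C\,\sup_{\Omega} \frac{|f|}{\lambda_0},
\end{equation*}
where $C$ depends only on $\operatorname{diam}(\Omega)$ and on $\sup_\Omega \|b\|/\lambda_0$. The proof of that estimate uses an explicit exponential barrier of the form $\alpha(e^{\gamma d}-e^{\gamma x_1})$, with $\gamma$ chosen large enough (in terms of $\|b\|_\infty/\lambda_0$) so that $\mathcal{L}$ applied to the barrier dominates $|f|$, followed by a comparison argument that relies precisely on $c \leq 0$. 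All of these structural hypotheses are exactly the ones the authors impose just above the theorem.

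To produce the two-sided bound I would apply the above estimate twice, once to $w$ and once to $-w$, using $\mathcal{L}[-w] = -\mathcal{L}[w]$ and the fact that both $\mathcal{L}[w] \geq -\delta_2$ and $\mathcal{L}[-w] \geq -\delta_2$ hold in $\Omega$. This gives
\begin{equation*}
\sup_\Omega w \leq \delta_1 + \frac{C\delta_2}{\lambda_0}, \qquad \sup_\Omega (-w) \leq \delta_1 + \frac{C\delta_2}{\lambda_0},
\end{equation*}
and combining them yields the desired $\sup_{x\in\Omega}|F_\Theta(x)-u(x)| \leq \delta_1 + C\delta_2/\lambda_0$.

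The conceptual content of the proof is entirely captured by the maximum principle; no approximation-theoretic argument about neural networks is needed. The main obstacle, such as it is, is bookkeeping: verifying that the constant $C$ genuinely depends only on $\mathcal{L}$ and $\Omega$ (through $\operatorname{diam}(\Omega)$ and $\sup \|b\|/\lambda_0$), and that the one-sided formulation of Gilbarg--Trudinger's theorem can be upgraded to a two-sided $L^\infty$ bound by applying it to $\pm w$. The regularity assumption (3) is tailored so that $w$ lies in the exact function class required by that theorem, so no additional approximation or density argument is necessary.
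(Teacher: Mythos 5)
Your proof is correct and follows essentially the same route as the paper: both reduce the statement to the a priori maximum-principle bound of Theorem 3.7 in Gilbarg--Trudinger applied to the difference $F_\Theta-\tilde u$, whose hypotheses (strict ellipticity, $c\le 0$, bounded $\|b\|/\lambda$, and the $C^0(\bar\Omega)\cap C^2(\Omega)$ regularity) are exactly those assumed. The only cosmetic difference is that you derive the two-sided bound by applying the one-sided estimate to $\pm w$, whereas the paper cites the two-sided form of that theorem directly.
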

\begin{proof}
	Denote $h_1 := \mathcal{L}[F_\Theta]-q$ and $h_2:=F_\Theta-\tilde u$. Since $F_\Theta$ and $\tilde u$ fall in  $C^0(\bar\Omega)\cap C^2(\Omega)$, then we have $h_2\in C^0(\bar\Omega)\cap C^2(\Omega)$.  Thanks to Theorem 3.7 in \cite{gilbarg2015elliptic}, we obtain
	\begin{align*}
	\sup_\Omega|h_2(x)|\leq\sup_{\partial \Omega}|h_2(x)|+C\cdot \sup_\Omega\frac{|h_1(x)|}{\lambda(x)},
	\end{align*}
	where $C$ is a positive constant depending only on  $\sup_\Omega\{\|b(x)/\lambda(x)\|\}$ and the diameter of $\Omega$. It immediately follows that inequality \eqref{thm1_eq} holds.
\end{proof}
For Poisson's equations, the second order operator $\mathcal{L}$ degenerates into the Laplace operator $\Delta$, where $a_{ij}(x)=\delta_{ij}$, $b_i(x)\equiv 0$ , $c(x)\equiv 0$ and $\lambda_0=1$. Thus we have the following corollary:
\begin{corollary}\label{poisson_thm}
	Suppose that $\Omega\subset \mathbb{R}^d$ is a bounded domain, and the ground truth $\tilde u\in C^0(\bar\Omega)\cap C^2(\Omega)$. If a neural network $F_\Theta$ satisfies that
	\begin{enumerate}
		\item[\rm{(1).}] $ \sup_{x\in\partial\Omega}|F_{\Theta}(x)-\tilde u(x)|<\delta_1$;
		\item[\rm{(2).}]  $\sup_{x\in\Omega} |\Delta F_\Theta(x)-q(x)|< \delta_2$;
		\item[\rm{(3).}] $F_\Theta\in C^0(\bar \Omega)\cap C^2(\Omega)$;
		\item[\rm{(4).}] $\Omega$ lies between two parallel planes a distance $d$ apart,
	\end{enumerate}
	then the error of $F_\theta(\cdot)$ over $\Omega$ is bounded by
	\begin{align*}
	\sup _{x\in\Omega}|F_{\Theta}(x)-\tilde u(x)| \leqslant \left(\delta_1+\left(e^{d}-1\right) \delta_2\right).
	\end{align*}
\end{corollary}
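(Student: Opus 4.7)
The plan is to derive this corollary as a direct specialization of Theorem~\ref{thm1}, with the additional task of pinning down the constant $C$ explicitly in the Laplacian setting. First I would verify that the hypotheses of Theorem~\ref{thm1} are met: the Laplace operator corresponds to $a_{ij} = \delta_{ij}$, $b_i \equiv 0$, $c \equiv 0$, so it is strictly elliptic with uniform ellipticity constant $\lambda_0 = 1$ and trivially $c \leq 0$, while conditions (1)--(3) of the corollary match conditions (1)--(3) of the theorem verbatim. Theorem~\ref{thm1} then immediately yields $\sup_{\Omega}|F_\Theta - \tilde u| \leq \delta_1 + C \delta_2$ for some constant $C = C(\Delta, \Omega)$.

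The remaining work is to show $C \leq e^d - 1$ under the slab hypothesis~(4); I would redo the relevant barrier argument by hand rather than unpacking the constant from the abstract reference. After a rigid motion we may assume $\Omega \subset \{x : 0 \leq x_1 \leq d\}$. Take as barrier $W(x) := e^d - e^{x_1}$, which on the slab satisfies $0 \leq W \leq e^d - 1$ and $\Delta W = -e^{x_1} \leq -1$. Setting $h_1 := \Delta F_\Theta - q$ and $h_2 := F_\Theta - \tilde u$, so that $\Delta h_2 = h_1$ with $|h_1| < \delta_2$ on $\Omega$ and $|h_2| < \delta_1$ on $\partial\Omega$, I would apply the weak maximum principle to the pair of auxiliary functions $v_\pm := \pm h_2 - \delta_2 W$. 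Each is subharmonic on $\Omega$, since $\Delta v_\pm = \pm h_1 - \delta_2 \Delta W \geq -\delta_2 + \delta_2 = 0$, while on $\partial\Omega$ we have $v_\pm \leq \delta_1$ because $W \geq 0$ and $\pm h_2 \leq \delta_1$. Consequently $v_\pm \leq \delta_1$ throughout $\Omega$, which rearranges to $\pm h_2(x) \leq \delta_1 + \delta_2 W(x) \leq \delta_1 + (e^d - 1)\delta_2$, giving the claimed bound.

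The main obstacle is not the barrier computation itself, which is classical, but keeping careful track of signs so that both $v_+$ and $v_-$ come out subharmonic simultaneously and the $\delta_1$ bound on boundary data passes through in the correct direction. Once the exponential barrier is tuned so that $\Delta W \leq -1$ uniformly in the slab, the $\delta_2$-weighted correction precisely absorbs the worst-case PDE residual while the $\delta_1$ term survives untouched from the boundary; this is the same mechanism underlying the abstract constant $C$ of Theorem~\ref{thm1}, made quantitative here by the explicit slab geometry.
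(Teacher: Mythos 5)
Your proposal is correct and follows essentially the same route as the paper: the paper simply defers to Corollary 3.8 of Gilbarg--Trudinger and omits the details, and your explicit barrier argument (the slab barrier $W(x)=e^{d}-e^{x_1}$ with $\Delta W\leq -1$, applied to $v_{\pm}=\pm(F_\Theta-\tilde u)-\delta_2 W$ via the weak maximum principle) is precisely the classical proof underlying that cited result, with all signs and regularity hypotheses handled correctly. The only difference is that you make the omitted argument self-contained and the constant $e^{d}-1$ explicit, which is exactly what the citation was standing in for.
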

\begin{proof}
	The proof is similar to Corollary 3.8 in \cite{gilbarg2015elliptic}, and we omit it.
\end{proof}
We discuss the assumptions in Theorem 2.1 and Corollary 2.2. 
\begin{itemize}
	\item If we regard $F_\Theta$ as an input-output black box, it seems impossible to verify whether $F_\Theta$ satisfies condition (1) and (2). In practice, we can sample sufficient points in $\Omega$ and $\partial\Omega$ to estimate the expected loss. Then the expected loss seems more reasonable than $\sup(\cdot)$. In Theorem \ref{l2thm}, we will give an error bound under the expectation sense instead of $\sup(\cdot)$ .
	
	\item Conditions (1) and (2) imply that PINNs can solve elliptic PDEs stably with noises. Suppose that $\tilde u=u_{true} +\varepsilon_1$ and $q=q_{true}+\varepsilon_2$. The error bound \eqref{thm1_eq} then becomes 
	\begin{align}
	\sup _{x\in\Omega}|F_{\Theta}(x)-u_{true}(x)| \leqslant \delta_1+\sup_{\partial \Omega}\varepsilon_1(x)+\frac{C}{\lambda_0} (\delta_2+\sup_{\Omega}\varepsilon_2(x)) \label{thm1_eq_noise}.
	\end{align}
	\item Condition (4) in Theorem \ref{poisson_thm} implies that a narrow region $\Omega$ may reduce errors of PINNs. 
\end{itemize}
In the following, we will measure the discrepancy via an expected loss function instead of $\sup(\cdot)$ to derive the error bound.
To achieve the goal, we choose a smooth dictionary  and smooth activation functions such that $F_\Theta\in C^\infty(\bar\Omega)$. Then we obtain that $\mathcal{L}[F_\Theta]$ is  $l_0$-Lipschitz continuous on $\bar\Omega$ for some constant $l_0>0$. We additionally assume that $q$ is also $l_0$-Lipschitz continuous. Before we propose the final theorem, we construct a relationship between $\|\cdot\|_{L_1}$ and $\sup(\cdot)$ in the following lemma:
\begin{lemma}\label{lemma1} Let $\Omega\subset\mathbb{R}^d$ be a domain. Define the regularity of $\Omega$ as
	\begin{align*}
	R_\Omega:=\inf_{x\in\Omega,r>0}\frac{|B(x,r)\cap\Omega|}{\min\left\{|\Omega|,\frac{\pi^{d/2}r^d}{\Gamma(d/2+1)}\right\}},
	\end{align*}    
	where $B(x,r):=\{y\in\mathbb{R}^d| \|y-x\|\leq r\}$ and $|S|$ is the Lebesgue measure of a set $S$.
	Suppose that $\Omega$ is bounded and $R_\Omega>0$. Let
	$f\in C^0(\bar \Omega)$ be an $l_0$-Lipschitz continuous function on $\bar\Omega$. Then
	\begin{align}\label{lemma_ineq}
	\sup_{\Omega} |f|\leq \max\left\{\frac{2\|f\|_{L_1}}{R_{\Omega}|\Omega|}, 2l_0\cdot\left(\frac{\|f\|_{L_1}\cdot\Gamma(d/2+1)}{l_0R_{\Omega}\cdot \pi^{d/2}}\right)^{\frac{1}{d+1}}\right\}.
	\end{align}
\end{lemma}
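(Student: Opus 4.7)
The plan is to combine the $l_0$-Lipschitz continuity with the regularity hypothesis by observing that where $|f|$ is large, $|f|$ must remain sizeable on an entire ball around that point, and the regularity of $\Omega$ forces that ball to intersect $\Omega$ in a set of controlled measure. Set $M := \sup_\Omega |f|$; the case $M = 0$ is trivial, so assume $M > 0$. For an arbitrary $x^* \in \Omega$ with $|f(x^*)| > 0$, and $r > 0$ to be chosen, the reverse triangle inequality applied to the Lipschitz bound gives $|f(y)| \geq |f(x^*)| - l_0 r$ for every $y \in B(x^*, r) \cap \Omega$. Hence, whenever this lower bound is nonnegative,
\[
\|f\|_{L_1} \;\geq\; \int_{B(x^*,r)\cap\Omega} |f(y)|\,dy \;\geq\; \bigl(|f(x^*)| - l_0 r\bigr)\,|B(x^*, r)\cap\Omega|.
\]
The regularity hypothesis then yields $|B(x^*, r)\cap\Omega| \geq R_\Omega \min\{|\Omega|, \pi^{d/2} r^d / \Gamma(d/2+1)\}$.

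The next step is to choose $r$ to balance the linear shrinkage of $|f(x^*)| - l_0 r$ against the ball-volume contribution. Setting $r := |f(x^*)|/(2l_0)$ keeps the first factor nonnegative and equal to $|f(x^*)|/2$. Substituting and splitting on which term of the $\min$ is active produces two cases. If the minimum equals $|\Omega|$ (i.e., $\Omega$ is small compared with the chosen ball), the inequality rearranges to $|f(x^*)| \leq 2\|f\|_{L_1}/(R_\Omega|\Omega|)$, which is the first term in the maximum on the right-hand side of \eqref{lemma_ineq}. Otherwise the ball volume is the smaller quantity, and one obtains $\|f\|_{L_1} \geq |f(x^*)|^{d+1} R_\Omega \pi^{d/2}/(2^{d+1} l_0^d \Gamma(d/2+1))$; solving for $|f(x^*)|$ gives exactly the second term $2l_0 \bigl(\|f\|_{L_1}\Gamma(d/2+1)/(l_0 R_\Omega \pi^{d/2})\bigr)^{1/(d+1)}$.

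Since in either case $|f(x^*)|$ is bounded by the maximum of the two expressions, and neither expression depends on $x^*$, taking the supremum over $x^* \in \Omega$ yields \eqref{lemma_ineq}. A minor technicality is that the maximum of $|f|$ on $\bar\Omega$ may be attained only on $\partial\Omega$, where the regularity constant $R_\Omega$ is not directly defined; this is handled by applying the argument to a sequence $x_n \in \Omega$ with $|f(x_n)| \to M$ and passing to the limit using continuity of the two bounding expressions. The main obstacle is really just guessing the optimal radius: the choice $r = |f(x^*)|/(2l_0)$ is precisely what produces the exponent $1/(d+1)$ and the clean constant $2$ appearing in both terms of the stated bound; any other scaling would yield weaker or less symmetric constants.
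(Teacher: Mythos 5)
Your proof is correct and follows essentially the same route as the paper's: integrate the Lipschitz lower bound $|f(y)| \geq |f(x^*)| - l_0\|y - x^*\|$ over $B\left(x^*, \frac{|f(x^*)|}{2l_0}\right)\cap\Omega$, invoke the regularity constant $R_\Omega$ to lower-bound the measure of that intersection, and split on which term of the minimum is active. Your limiting argument with interior points $x_n$ even tidies up a detail the paper glosses over, since the paper takes the maximizer $\bar x \in \arg\max_{\bar\Omega}|f|$ (possibly on $\partial\Omega$) while $R_\Omega$ is defined only via centers in $\Omega$.
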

\begin{proof}
	According to the definition of $l$-Lipschitz continuity, we have
	\begin{align*}
	l \|x-\bar x\|\geq|f(x)-f(\bar x)|,\quad \forall x,\bar x\in\bar \Omega,
	\end{align*}
	which follows
	\begin{align}\label{ineq1}
	\|f\|_{L_1(\bar \Omega)}\geq\int_{\Omega^+} |f(x)|dx\geq \int_{\Omega^+} |f(\bar x)|-l\|\bar x-x\|dx,
	\end{align}
	where $\Omega^+:=\{x\in\bar \Omega| |f(\bar x)|-l\|\bar x-x\|\geq 0\}$.
	Without loss of generality, we assume that $\bar x\in\arg\max_{\bar\Omega} |f|$ and $f(\bar x)>0$.
	Denote that
	\begin{align*}
	B_1:=B\left(\bar x,\frac{f(\bar x)}{2l}\right)\cap\Omega.
	\end{align*}
	It obvious that $B_1\subset\Omega^+$. Note that the Lebesgue measure of a hypersphere in $\mathbb{R}^d$ with radius $r$ is ${\pi^{d/2}r^d}/{\Gamma(d/2+1)}
	$. Then \eqref{ineq1} becomes
	\begin{align*}
	\|f\|_{L_1}&\geq\int_{B_1} f(\bar x)-l\|\bar x-x\|dx\nonumber\\
	&\geq |B_1|\cdot \frac{f(\bar x)}{2}\nonumber\\
	&\geq\frac{f(\bar x)}{2} \cdot R_\Omega\cdot\min\left\{|\Omega|,\frac{\pi^{d/2}{{f(\bar x)}}^d}{2^d l^d\Gamma(d/2+1)}\right\},
	\end{align*}
	which leads to \eqref{lemma_ineq}.
\end{proof}
It is obvious that $R_\Omega\leq1$ always holds. For various of domains $\Omega$ in practice, we have $R_\Omega>0$. For example, a square domain $\Omega\subset \mathbb{R}^3$ has $R_\Omega=1/8$. For a circle domain $\Omega\subset \mathbb{R}^2$, the regularity $R_\Omega$ is lower bounded by $2/3-\sqrt{3}/(2\pi)\approx 0.391$.

If we adopt smooth activation functions such as $\tanh(\cdot)$ and sigmoid, derivatives of neural networks are also smooth on $\bar \Omega$. Therefore, the Lipschitz continuity of $\mathcal{L}[F_\Theta]$ could be guaranteed.  Further analysis and estimation of the Lipschitz property of neural networks can be found in \cite{virmaux2018lipschitz}.
\begin{theorem}[Error bounds of PINNs on elliptic PDEs]\label{l2thm}
	Suppose that $\Omega\subset \mathbb{R}^d$ is a bounded domain, $\mathcal{L}$ is strictly elliptic and $\tilde u\in C^0(\bar \Omega)\cap C^2(\Omega)$ is a solution to \eqref{pde}. If neural network $F_\Theta$ satisfies that
	\begin{enumerate}
		\item[\rm{(1).}] $ \mathbb{E}_{\mathbf{Y}}\left[|F_{\Theta}(\mathbf{Y})-\tilde u(\mathbf{Y})|\right]<\delta_1$ where the random variable $\mathbf{Y}$ is uniformly distributed on $\partial\Omega$,
		\item[\rm{(2).}]  $\mathbb{E}_{\mathbf{X}}\left[ |\mathcal{L}[F_\Theta](\mathbf{X})-q(\mathbf{X})|\right]< \delta_2$ where the random variable $\mathbf{X}$ is uniformly distributed on $\Omega$,
		\item[\rm{(3).}] $F_\Theta, \mathcal{L}[F_\Theta], \tilde u, q$ are $\frac{l}{2}$-Lipschitz continuous on $\bar\Omega$,
	\end{enumerate}
	then the error of $F_\Theta(\cdot)$ over $\Omega$ is bounded by
	\begin{align}\label{theorem2_ineq}
	\sup _{x\in\Omega}\left| F_{\Theta}(x)-u(x) \right| \leqslant \tilde\delta_1+C \frac{\tilde\delta_2}{\lambda_0}, 
	\end{align}
	where $C$ is a positive constant depending on $\mathcal{L}$ and  $\Omega$, 
	\begin{align*}
	\tilde \delta_1:=& \max\left\{\frac{2\delta_1}{R_{\partial\Omega}}, 2l\cdot\left(\frac{\delta_1|\partial\Omega|\cdot\Gamma(d/2+1)}{lR_{\partial\Omega}\cdot \pi^{d/2}}\right)^{\frac{1}{d+1}}\right\},\quad\text{and}\\
	\tilde \delta_1:=& \max\left\{\frac{2\delta_1}{R_{\Omega}}, 2l\cdot\left(\frac{\delta_1|\Omega|\cdot\Gamma(d/2+1)}{lR_{\Omega}\cdot \pi^{d/2}}\right)^{\frac{1}{d+1}}\right\}.
	\end{align*}
	\begin{proof}
		Note that $\mathbb{E}_{\mathbf{Y}}[\cdot]=\|\cdot\|_{L_1(\partial\Omega)}/|\partial\Omega|$ and $\mathbb{E}_{\mathbf{X}}[\cdot]=\|\cdot\|_{L_1(\Omega)}/|\Omega|$ . $F_\Theta-\tilde u$ and $\mathcal{L} [F_\Theta] -q$ are both $l$-Lipschitz continuous on $\bar \Omega$. Combining Theorem \ref{thm1} and Lemma \ref{lemma1} leads to  \eqref{theorem2_ineq}.
	\end{proof}
\end{theorem}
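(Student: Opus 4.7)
The plan is to reduce the expected-value hypotheses to the sup-norm hypotheses of Theorem~\ref{thm1} by invoking Lemma~\ref{lemma1}. Each of assumptions (1) and (2) asserts that an $L_1$-type functional of a Lipschitz function is small, and Lemma~\ref{lemma1} is precisely the tool that upgrades such an $L_1$ bound to a pointwise one. Once sup-norm bounds are in hand, Theorem~\ref{thm1} closes the argument.

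First I would translate the expectation assumptions into $L_1$ norms, using that $\mathbf{Y}$ is uniform on $\partial\Omega$ and $\mathbf{X}$ is uniform on $\Omega$: this gives $\|F_\Theta-\tilde u\|_{L_1(\partial\Omega)}<\delta_1|\partial\Omega|$ and $\|\mathcal{L}[F_\Theta]-q\|_{L_1(\Omega)}<\delta_2|\Omega|$. Next, since the difference of two $l/2$-Lipschitz functions is $l$-Lipschitz, both $F_\Theta-\tilde u$ and $\mathcal{L}[F_\Theta]-q$ are $l$-Lipschitz on $\bar\Omega$. I would then apply Lemma~\ref{lemma1} with Lipschitz constant $l$ to $F_\Theta-\tilde u$ on $\partial\Omega$ (with regularity $R_{\partial\Omega}$) to obtain $\sup_{\partial\Omega}|F_\Theta-\tilde u|\leq\tilde\delta_1$, and analogously to $\mathcal{L}[F_\Theta]-q$ on $\Omega$ to obtain $\sup_{\Omega}|\mathcal{L}[F_\Theta]-q|\leq\tilde\delta_2$. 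Plugging $(\tilde\delta_1,\tilde\delta_2)$ into Theorem~\ref{thm1} in place of $(\delta_1,\delta_2)$ then produces \eqref{theorem2_ineq}.

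The main obstacle is the invocation of Lemma~\ref{lemma1} on the boundary: as stated, the lemma is for a full-dimensional domain in $\mathbb{R}^d$, whereas $\partial\Omega$ is a $(d-1)$-dimensional manifold of zero Lebesgue measure in $\mathbb{R}^d$. To make the step rigorous one should either work on $\partial\Omega$ under its intrinsic surface measure (replacing $d$ by $d-1$ in the $\Gamma(d/2+1)$ and $\pi^{d/2}$ factors, which mildly reshapes $\tilde\delta_1$), or restate Lemma~\ref{lemma1} for Lipschitz functions on a compact $(d-1)$-dimensional submanifold with a surface analogue of $R_{\partial\Omega}$. A secondary but routine point is justifying the blanket Lipschitz hypothesis: with smooth activations and a smooth dictionary, $F_\Theta\in C^\infty(\bar\Omega)$, so $F_\Theta$ and $\mathcal{L}[F_\Theta]$ are Lipschitz on the compact set $\bar\Omega$ with constants controllable in terms of the network weights, and the assumption on $\tilde u$ and $q$ is standard for elliptic data.
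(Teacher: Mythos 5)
Your proof follows exactly the paper's route: rewrite the expectations as normalized $L_1$ norms, observe that the differences of $\frac{l}{2}$-Lipschitz functions are $l$-Lipschitz, upgrade the $L_1$ bounds to sup bounds via Lemma~\ref{lemma1}, and feed the resulting $(\tilde\delta_1,\tilde\delta_2)$ into Theorem~\ref{thm1}. The subtlety you flag --- that Lemma~\ref{lemma1} is stated for a full-dimensional domain and cannot be applied verbatim to the measure-zero set $\partial\Omega$ without passing to the intrinsic surface measure and $(d-1)$-dimensional constants --- is a genuine gap that the paper's one-line proof silently glosses over (its $\tilde\delta_1$ even retains the dimension-$d$ factors), so your proposed repair is a strengthening of, not a departure from, the paper's argument.
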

In Theorem \ref{l2thm}, we have proved that  when the tractable training loss decreases to $0$ in the sense of expectation, the neural network $F_\Theta$ approximates the ground truth.
\section{Numerical Experiments}
Our implementation is heavily inspired by the framework DeepXDE\footnote{\url{https://deepxde.readthedocs.io/}}. We also take the following standard technical settings in the numerical simulations:
\begin{itemize}
	\item \textbf{Initialization.} The initialization of $\Theta$ might be vital, but this topic goes beyond our discussion. Instead, we employ the standard initialization. Each entry in $W_i$ and $b_i$ is uniformly and independently distributed on the interval $[-1/\sqrt{d_{i-1}},1/\sqrt{d_{i-1}}]$.
	
	\item \textbf{Sampling.} We focus on the comparison between PINNs\cite{lu2019deepxde} and PD-PINNs in this section, where uniform distribution is applied to all simulations. Indeed, one can employ more sophisticated  sampling strategies with adaptive adjustment\cite{lu2019deepxde} in the training of both PINNs and PD-PINNs.
	
	\item \textbf{Optimizer.} The popular optimizer Adam\cite{KingmaB14} with the learning rate of $0.001$ is employed in this section.
	\item \textbf{Loss.} 
	Since the multi layer neural networks are regarded as  black-boxes in context, computing analytic forms of losses seems intractable. As illustrated in \eqref{pde_loss_1} and \eqref{bc_loss_1}, in the $k$-th iteration we estimate $Loss_{PDE}$ via the empirical loss function
	\begin{align}\label{pde_loss_2}
	\widehat{Loss}_{PDE}(\Theta_k):=\frac{1}{N_{PDE}}\sum_{i=1}^{N_{PDE}}|\mathcal{L}[F_{\Theta_k}](\mathbf{X}^{k}_i)-q(\mathbf{X}^{k}_i)|^2,
	\end{align}
	where $\mathbf{X}_i^{k}$ are i.i.d. variables uniformly distributed on $\Omega$. We also estimate the $Loss_{BC}(\Theta)$ via
	\begin{align}\label{bc_loss_2}
	\widehat{Loss}_{BC}(\Theta_k):=\frac{1}{N_{BC}}\sum_{i=1}^{N_{BC}}|F_{\Theta_k}(\mathbf{Y}^{k}_i)-\tilde u(\mathbf{Y}^{k}_i)|^2,
	\end{align}
	
	where $\mathbf{Y}_i^{k}$ are i.i.d. variables uniformly distributed on $\partial\Omega$. Since it is also hard to track exact prediction error between $F_\Theta$ and the ground truth
	$\tilde u$ during training stages, in the $k$-th iteration we  employ a Monte Carlo way to estimate $\|F_\Theta-\tilde u\|_{L_2(\Omega)}^2$ via
	\begin{align*}
	\widehat{Error}_{predict}(\Theta_k):=\frac{1}{N_{pred}}\sum_{i=1}^{N_{pred}} |F_{\Theta_k}(\mathbf{Z}^{k}_i)-\tilde u(\mathbf{Z}^{k}_i)|^2,
	\end{align*}
	where $\mathbf{Z}_i^{k}$ are i.i.d. variables uniformly distributed on $\Omega$. We set $N_{pred}=1000$ in this section.
\end{itemize}
All simulations in this section are conducted with PyTorch.  
The code to reproduce all the results is available online\footnote{\url{https://github.com/weipengOO98/PDPINN.git}}.
\subsection{1d Poisson's Equation}\label{poisson1d}
First, we consider a one dimensional Poisson's equation with the Dirichlet boundary condition on both ends. Though the 1d problem seems simpler than its higher dimension versions, it is actually hard  for neural networks to learn. The value of $F_\Theta$ at an interior point is decided by two paths which connect the interior point and the two boundary ends. A slightly large error on one of the paths will result in large error in predictions of interior values.

Consider the ground truth:
\begin{align*}
\hat u:=\sin(0.7x)+\cos(1.5x)-0.1x, \quad \forall x\in[-10,10],
\end{align*}
which is smooth and has two different frequency components combining with a linear term. Its graph is shown in Figure \ref{f1}.
\begin{figure}[htbp]
	\centering
	\includegraphics[width=0.4\textwidth]{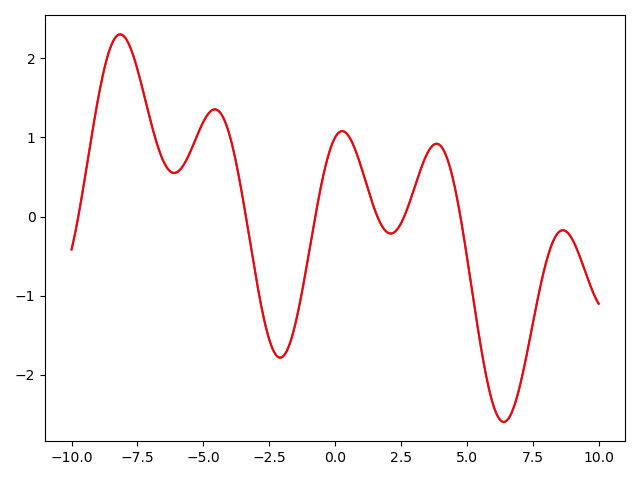}
	\caption{Illustration of $\hat u$.}
	\label{f1}
\end{figure}
The corresponding 1d Poisson's equation is formulated as follows:
\begin{align*}
\frac{d^2u}{dx^2}&=-0.49\cdot \sin(0.7  x) - 2.25\cdot \cos(1.5 x),\\
u(-10)&=\hat u(-10),\\
u(10)&=\hat u(10).
\end{align*}
We employ a frequency based dictionary $D$ with $2k+1$ word functions:
\begin{align*}
D(x)=\left\{1,\cos(x),\sin(x), \cos(2x),\sin(2x),\cdots,\cos(kx),\sin(kx)\right\}.
\end{align*}
Take $N_{PDE}=100$, and the boundary value condition at two ends is included in the loss function in each iteration. The results are shown in Figure \ref{poisson2d}. PINNs implemented by MLPs fail to find the ground truth, though the curvature shares some similar tendency with $\hat u$. The failure might be caused by the propagation perturbation of boundary information. However, with  dictionary $D$ integrated, the PD-PINNs have the ability to represent higher frequency even at initial iterations, and this ability might allow $F_\Theta$ to broadcast information via the frequency domain instantly instead of gradual transmission through the spatial domain. 

\begin{figure}[htbp]
	\centering
	\includegraphics[width=1\textwidth]{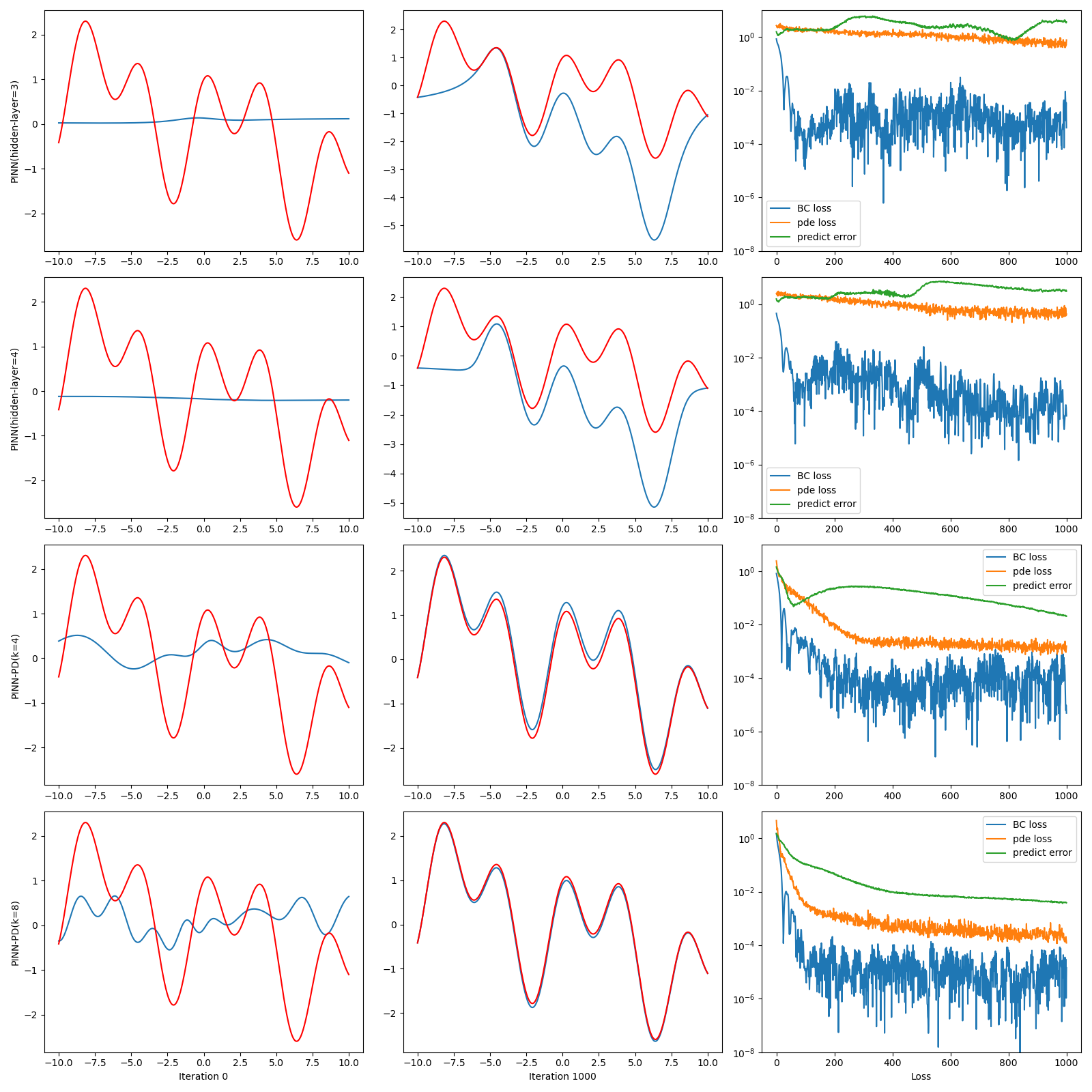}
	\caption{The first row shows the result of an MLP with 3 hidden layers. The second row displays the result of an MLP with 4 hidden layers. The third row employs a PD-PINN with 3 hidden layers and $k=4$. The fourth row employs a PD-PINN with 3 hidden layers and $k=8$. With the ground truth marked in red, the first column displays initial response curves while the second column shows response curves of $F_\Theta$ after 1000 iterations.}
	\label{p1_all}
\end{figure}
\subsection{2d Poisson's Equation}\label{poisson2d}
Define the ground truth on $[-10,10]\times[-10,10]\subset\mathbb{R}^2$:
\begin{align*}
\hat u(x,y)=\left(\sin (0.7x) +\cos (1.5x) - 0.1 x\right)\cdot\sin\left(
\frac{y + 10}{20}\pi\right).
\end{align*}
The graph of $\hat u$ is shown in Figure \ref{f5}.
\begin{figure}[htbp]
	\centering
	\includegraphics[width=0.7\textwidth]{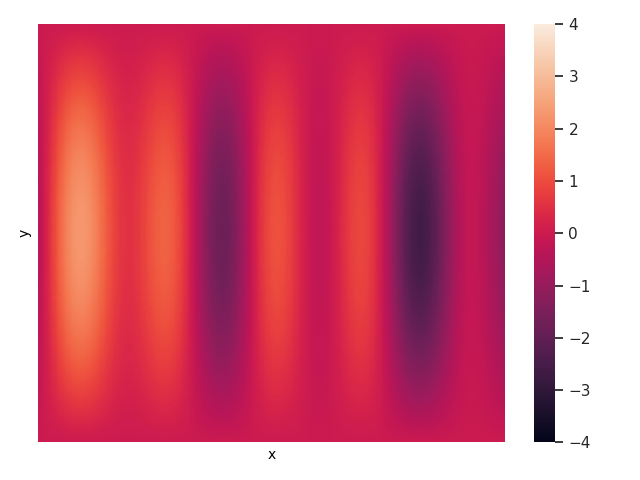}
	\caption{Illustration of $\hat u$}
	\label{f5}
\end{figure}
We formulate the 2d Poisson's equation as
\begin{align*}
u_{xx}+u_{yy}=&-\sin\left(\frac{y+ 10}{ 20} \pi\right)
(0.49 \sin(0.7 x) + 2.25 \cos(1.5 x))\\
&- (\sin(0.7 x) + \cos(1.5 x) - 0.1 x) \sin\left(\frac{y+ 10}{20}\pi\right) \frac{\pi^2} {400},\\
u(x, 10)=&0,\quad u(x,-10)=0, \quad x\in[-10,10],\\
u(10,y)=&\hat u(10,y),\quad y\in[-10,10],\\
u(-10,y)=&\hat u(-10,y), \quad y\in[-10,10].\\
\end{align*}
We construct a dictionary $D_{k_1,k_2}$ via
\begin{align*}
D_{k_1}^1&:=\left\{1, \sin(\pi x),\frac{\sin(2\pi x)}{2},\cdots, \frac{\sin((k_1-1)\pi x)}{k_1-1} \right\},\\
D_{k_2}^2&:=\left\{1, \sin(\pi y),\frac{\sin(2\pi y)}{2},\cdots, \frac{\sin((k_2-1)\pi y)}{k_2-1} \right\},\\
D_{k_1,k_2}&:=\left\{f_1f_2| f_1\in D_{k_1}^1, f_2\in D_{k_2}^2 \right\}.
\end{align*}
Take $N_{PDE}=1000$ and $N_{BC}=400$. Setting $k_1=k_2=5$, we have $25$ word functions in this dictionary. The result is shown in Figure \ref{diffusion_all}. It is obvious that the PD-PINN outperforms the PINN on this problem.
\begin{figure}[htbp]
	\centering
	\includegraphics[width=1\textwidth]{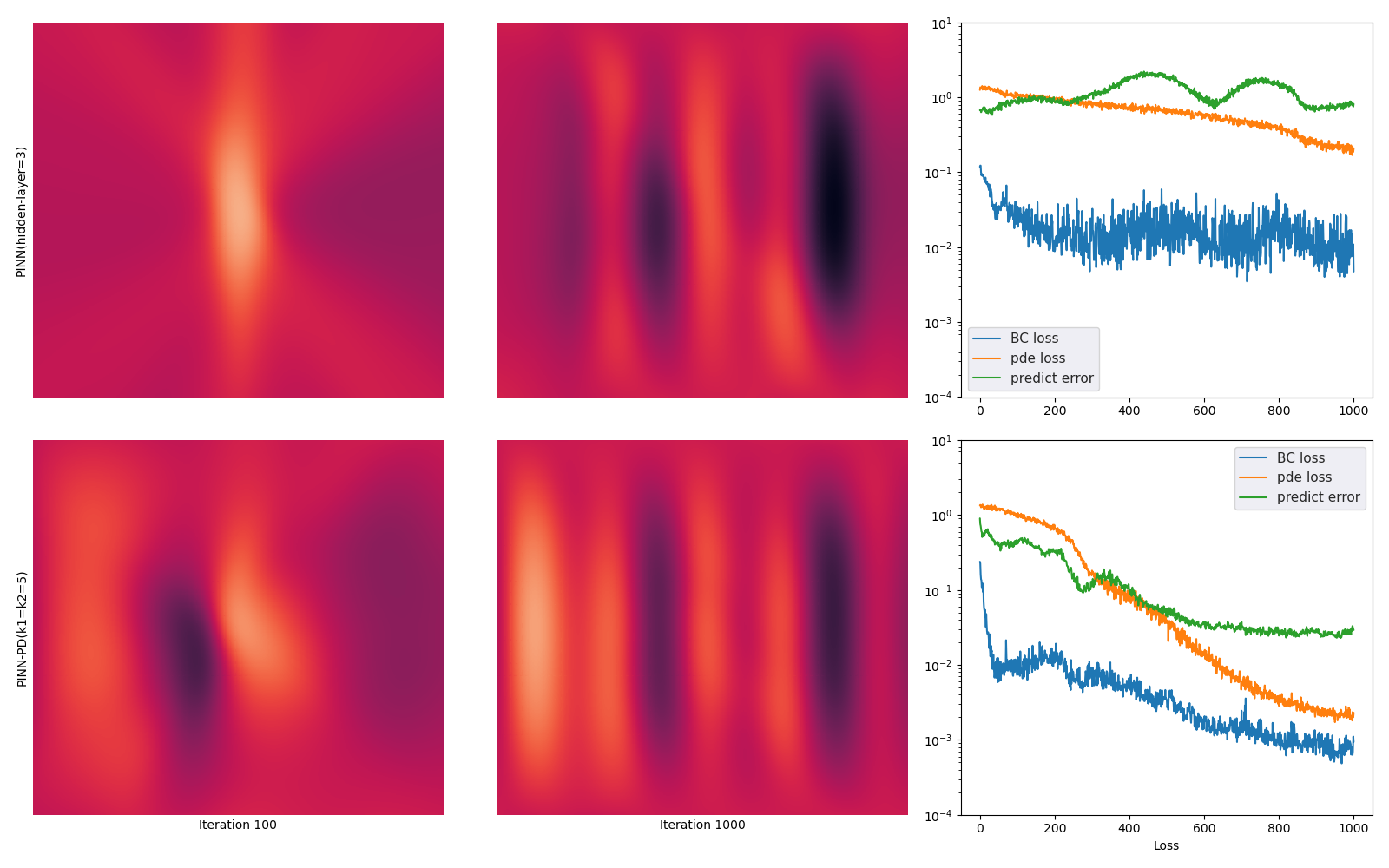}
	\caption{The first row shows the result of an MLP with 4 hidden layers. The second row employs a PD-PINN with 3 hidden layers and dictionary  $D_{5,5}$. The first column displays the response surface after 100 iterations while the second column shows that of $F_\Theta$ after 1000 iterations. }
	\label{diffusion_all}
\end{figure}

\subsection{Spherical Poisson's Equation}\label{poissonsph}
We consider the solution of Poisson's equation on a sphere and PD-PINNs with the sphere Harmonic basis as a dictionary. 

Let $u(\theta, \phi)$ be a scalar function on a sphere, where the location of a point is indicated by colatitude $0\leq\theta\leq\pi$ and longitude $0\leq\phi<2\pi$.
We employs the special form\cite{yee1981solution} in the experiment. Let the ground truth be 
\begin{align*}
\hat u(\theta,\phi):=\cos\theta \cdot \sin^M\theta\cdot  \cos(
M \phi ) -\cos\theta \cdot \sin^{M-1}\theta\cdot  \cos(
(M-1) \phi ), \quad M=7.
\end{align*}
Its Mercator projection is displayed in Figure \ref{mp}.
\begin{figure}[htbp]
	\centering
	\includegraphics[width=0.9\textwidth]{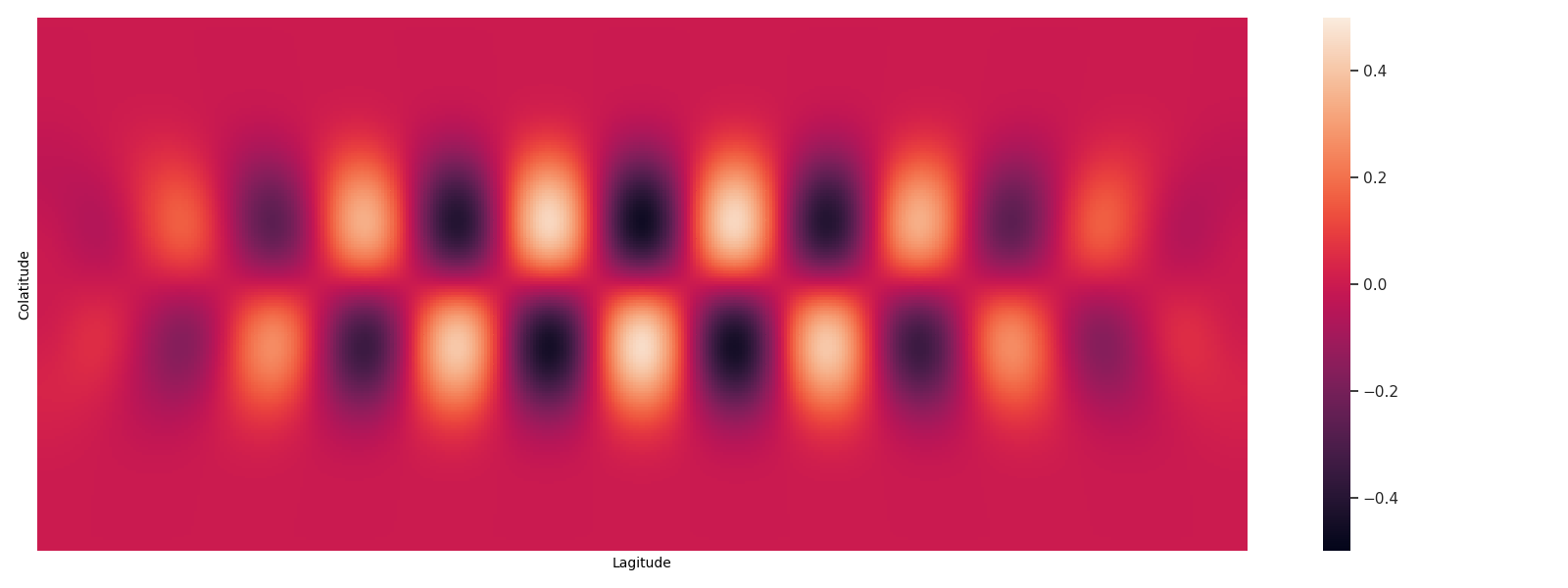}
	\caption{The Mercator projection of $\hat u(\theta,\phi)$}
	\label{mp}
\end{figure}
We formulate the Poisson's equation on the sphere as:
\begin{align}
\frac{1}{\sin \theta} \cdot\frac{\partial}{\partial \theta} \left(\sin \theta \frac{\partial u}{\partial \theta}\right)+\frac{1}{\sin ^{2} \theta}\cdot \frac{\partial^{2} u}{\partial \phi^{2}}&=f(\theta, \phi),\\
u(1,1)&=\hat u(1,1)\label{bc_sph},
\end{align}
where
\begin{align*}
f(\theta, \phi)=&-(M + 1) (M + 2) \cos\theta\sin^M\theta \cos(M \phi ) \\
&+M(M + 1) \cos\theta \sin^{M-1}\theta  \cos(
(M-1) \phi), \quad M=7.
\end{align*}
Note that \eqref{bc_sph} is the boundary value condition, which is a single point but enough to make the solution unique. We also alter the structure of neural networks employed in this subsection. As is shown in Figure \ref{ebd3d}, we put a lifting layer right after the input layer, which lifts $(\theta, \phi)\subset\mathbb{R}^2$ to $(x,y,z)\in\mathbb{R}^3$ via
\begin{align*}
(x,y,z)=(\sin\theta\sin\phi, \sin\theta\cos\phi,\cos \theta).
\end{align*}
\begin{figure}[htbp]
	\centering
	\includegraphics[width=1\textwidth]{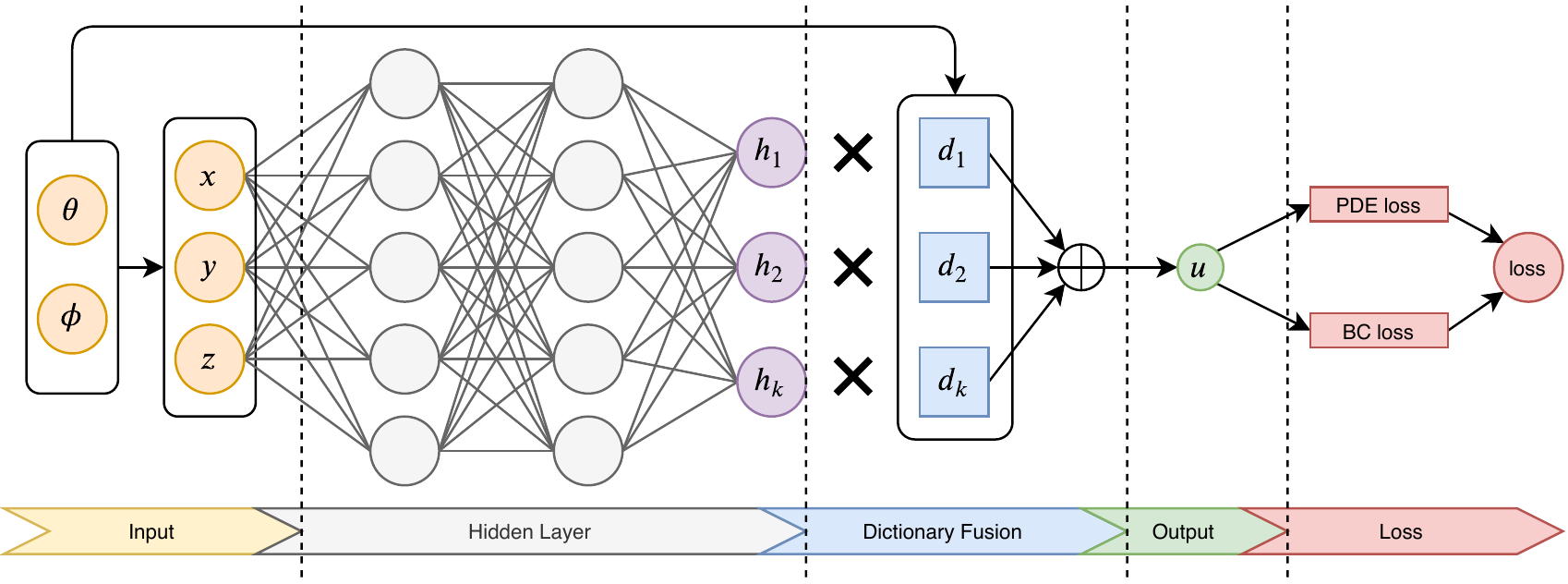}
	\caption{Illustration of PD-PINN with lifting.}
	\label{ebd3d}
\end{figure}
To construct the dictionary, we employ $16$ real spherical harmonic basis functions\cite{maintz2016efficient} as the word functions,
\begin{align*}
D&:=D^+\cup D^-\quad \text{with}\\
D^+&:=\{C_{m,l}\cdot \cos(m\phi)\cdot P_{m}^{l}(\cos(\theta))|0\leq l\leq3, 0\leq m\leq l\}\\
D^-&:=\{C_{m,l}\cdot \sin(m\phi)\cdot P_{m}^{l}(\cos(\theta))|0\leq l\leq3, 0> m\geq-l\},
\end{align*}
where $P_{m}^l(\cdot)$ are the associated Legendre polynomial functions and $C_{m,l}$ are normalization constants.  Set $N_{PDE}=200$, and \eqref{bc_sph} is taken into account in each iteration. The results are shown in Figure \ref{f4}. The PINN fails to recover $\hat u$ in $2000$ iterations while the PD-PINN recovers the ground truth with the error below 0.001.
\begin{figure}[htbp]
	\centering
	\includegraphics[width=0.9\textwidth]{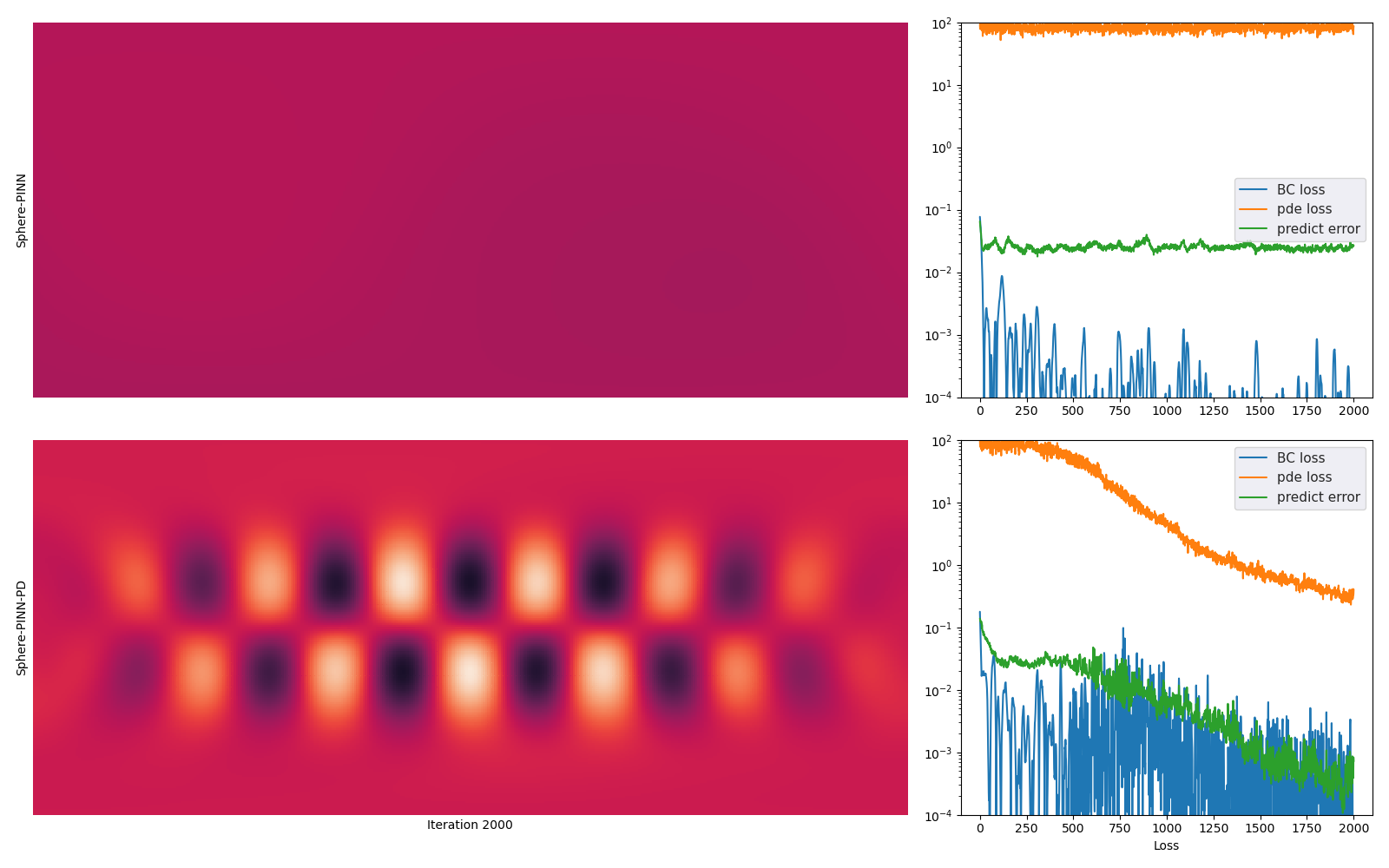}
	\caption{The first row is the result that is produced by MLP with 4 hidden layers. The second row is the result that is produced by the PD-PINN.}
	\label{f4}
\end{figure}

\subsection{Diffusion Equation}\label{poissonTime}
The last simulation is conducted on a parabolic equation.
Define the ground truth
\begin{align*}
\hat u(x,t)=(\sin(0.7 x) + \cos(1.5x) - 0.1 x) \cdot t,
\quad(x,t)\in[-10,10]\times[0,1].
\end{align*}
which is illustrated in Figure \ref{f3}:
\begin{figure}[htbp]
	\centering
	\includegraphics[width=0.9\textwidth]{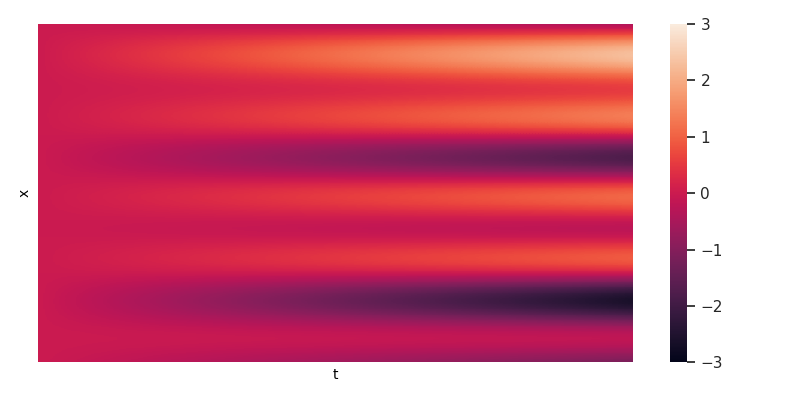}
	\caption{Illustration of $\hat u$}
	\label{f3}
\end{figure}
Consider the one-dimensional diffusion equation:
\begin{align}
u_{xx}(x,t)-u_t(x,t)&=\hat u_{xx}(x,t)-\hat u_t(x,t)\quad,\forall x,t\in[-10,10]\times[0,1],\nonumber\\
u(x,0)&=0,\quad \forall x\in[-10,10],\label{bc1}\\
u(-10,t)&=\hat u(10,t)\equiv 0,\quad \forall t\in[0,1].\label{bc2}
\end{align}
Though the input is two-dimensional, we could employ a dictionary only depends on one of the dimensions:
\begin{align*}
D_k:=\{1,\cos x,\sin x,\cos 2x,\sin 2x ,\cdots, \cos kx, \sin kx\}.
\end{align*}
We employ $D_{10}$ with $21$ words involved. Take $N_{PDE}=1000$ inside. Note that we regard the initial value condition \eqref{bc1} as a boundary value conditions and take $N_{BC}=300$ . As is shown in Figure \ref{diffusion_2d_all}, the PD-PINN outperforms the PINN. As we have emphasized earlier in the manuscript, the loss curve drawn in the last subfigure suggests that the rapid vanishment of $\widehat {Loss}_{BC}$ and $\widehat {Loss}_{PDE}$ do not necessarily imply an equivalent decline of the prediction error.

\begin{figure}[htbp]
	\centering
	\includegraphics[width=0.9\textwidth]{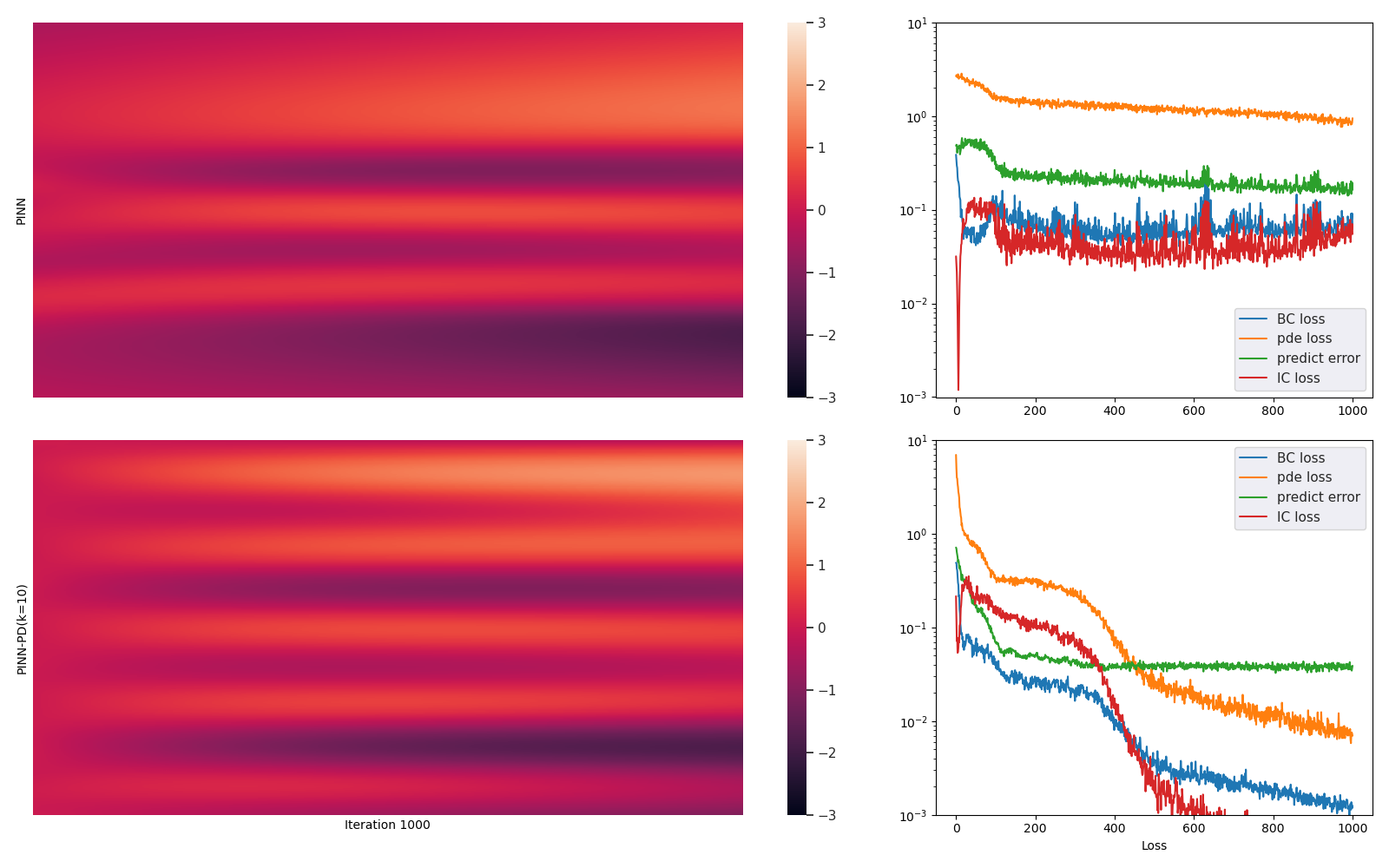}
	\caption{The first row is the result produced by an MLP with 4 hidden layers. The second row is produced by the PD-PINN with $D_{10}$.}
	\label{diffusion_2d_all}
\end{figure}

\section{Conclusion}
In this manuscript, we have proposed a novel PINN structure, which combines PINNs with prior dictionaries. With proper adoption of word functions, we illustrated that PD-PINNs outperform PINNs in our simulations with various settings. We also noted that the convergence of PINNs lacks a theoretical guarantee and thus proposed an error bound on the elliptic PDEs of second order. To our knowledge, this is the first theoretical error analysis on PINNs. 

However, to make PINNs be more practical and universal PDE solvers, we still need to understand the way in which PINNs learn about physics information. Error bounds on other types of PDEs besides elliptic PDEs should also be established.
\section*{Acknowledgements}
We thank Dr. Wenjie Lu and Dr. Dong Cao for their insightful suggestions. This work was supported in part by National Natural Science Foundation of China under Grant No.51675525 and 11725211.




\end{document}